\DeclareMathOperator*{\argmax}{arg\,max}
\DeclareMathOperator*{\argmin}{arg\,min}
\newcommand{\lif}[0]{
  \leftarrow
}
\newcommand{\rif}[0]{
  \rightarrow
}
\newcommand{\vt}[1]{
\mathbf{#1}
}
\newcommand{\fn}[1]{
{\it #1}
}
\newcommand{\xor}[0]{
  \oplus
}
\newcommand{\fzand}[0]{
  \wedge
}
\newcommand{\bAnd}[0]{
  \bigwedge
}
\newcommand{\fzor}[0]{
  \vee
}
\newcommand{\bOr}[0]{
  \bigvee
}
\newcommand{\fziff}[0]{
  \leftrightarrow
}
\newcommand{\pr}[1]{
  \mathrm{#1}
}
\newcommand{\En}[0]{ % Energy function
  \fn{E}
}
\newcommand{\xdownarrow}[1]{%
  {\left\downarrow\vbox to #1{}\right.\kern-\nulldelimiterspace}
}
\newtheorem{proposition}{Proposition} 
\newtheorem{theorem}{Theorem}
\newtheorem{lemma}{Lemma}
\newtheorem{definition}{Definition}
\newtheorem{example}{Example}
\begin{document}
\title{Logical Boltzmann Machines}
\author[1]{Son N. Tran}
\author[2]{Artur d'Avila Garcez}
\affil[1]{University of Tasmania (\texttt{sn.tran@utas.edu.au})}
\affil[2]{City, University of London (\texttt{a.garcez@city.ac.uk})}
\date{}

\maketitle
\sloppy

\begin{abstract}
The idea of representing symbolic knowledge in connectionist systems has been a long-standing endeavour which has attracted much attention recently with the objective of combining machine learning and scalable sound reasoning. Early work has shown a correspondence between propositional logic and symmetrical neural networks which nevertheless did not scale well with the number of variables and whose training regime was inefficient. In this paper, we introduce Logical Boltzmann Machines (LBM), a neurosymbolic system that can represent any propositional logic formula in strict disjunctive normal form. We prove equivalence between energy minimization in LBM and logical satisfiability thus showing that LBM is capable of sound reasoning. We evaluate reasoning empirically to show that LBM is capable of finding all satisfying assignments of a class of logical formulae by searching fewer than $0.75\%$ of the possible (approximately 1 billion) assignments. We compare learning in LBM with a symbolic inductive logic programming system, a state-of-the-art neurosymbolic system and a purely neural network-based system, achieving better learning performance in five out of seven data sets. 
\end{abstract}

\section{Introduction}
The research community has been witnessing an increasing attention devoted to the integration of learning and reasoning, especially through the combination of neural networks and symbolic AI \cite{Marcus2018,garcez2020neurosymbolic} into neurosymbolic systems. At the core of a neurosymbolic system there is an algorithm to represent symbolic knowledge in a neural network. One of the goals is to leverage the parallel and distributed properties of the network to perform reasoning. In many neurosymbolic approaches, the most used form of knowledge representation is if-then rules whereby logical reasoning is built upon Modus-Ponens as the only rule of inference \cite{Towel_1994,Franca_2014,Son_2018,Evans_18,Yang_2017,Robin_2018}. Given a formula $B \lif A$ (read ``B if A" following a logic programming notation) a neural network would either infer approximately $B=1$ (True) if $A=1$ by forward chaining, or search for the value of $A$ to confirm or refute the hypothesis $B=1$ (backward chaining). This has two shortcomings. First, Modus-Ponens alone may not capture entirely the power of logical reasoning as required by an application. For example, it may be the case in an application that if $B=0$ (False), the neural network is expected to infer approximately $A=0$ (Modus-Tollens). Second, one may wish to allow other forms of rules to be represented by the neural network such as disjunctive normal form (DNF) with any number of negative literals. 

In this paper, we introduce Logical Boltzmann Machines, a neurosymbolic system that can represent any propositional logic formula in a neural network and achieve efficient reasoning using restricted Boltzmann machines. We introduce an algorithm to translate any logical formula described in DNF into a Boltzmann machine and we show equivalence between the logical formula and the energy-based connectionist model. In other words, we show soundness of the translation algorithm. Specifically, the connectionist model will assign minimum energy to the assignments of truth-values that satisfy the formula. This produces a new way of performing reasoning in neural networks by employing the neural network to search for the models of the logical formula, that is, to search for assignments of truth-values that map the logical formula to true. We show that Gibbs sampling can be applied efficiently for this search with a large number of variables in the logical formula. If the number of variable is small, inference can be carried out analytically by sorting the free-energy of all possible truth-value assignments. Returning to our example with formula $B \lif A$, Logical Boltzmann Machines can infer approximately $B=1$ given $A=1$, and it can infer approximately $A=0$ given $B=0$ since both truth-value assignments - ($A=True$, $B=True$) and ($A=False$, $B=False$) - would minimise the energy of the network. 

In what concerns the representational issue of rules other than if-then rules, we propose a new way of converting any logical formula into strict DNF (SDNF) which is shown to map conveniently onto Restricted Boltzmann Machines (RBMs). In the experiments reported in this paper, this new mapping into SDNF and RBMs is shown to enable approximate reasoning with a large number of variables. The proposed approach is evaluated in a logic programming benchmark task whereby machine learning models are trained from data and background knowledge. Logical Boltzmann Machines achieved a better training performance (higher test set accuracy) in five out of seven data sets when evaluated empirically on this benchmark in comparison with a purely-symbolic learning system (Inductive Logic Programming system Aleph \cite{aleph}), a neurosymbolic system for Inductive Logic Programming (CILP++ \cite{Franca_2014}) and a purely-connectionist system (standard RBMs \cite{Smolensky_1995}).

The contribution of this work is twofold:
\begin{itemize}
    \item A theoretical proof to equivalently map logical formulas and probabilistic neural networks, namely Restricted Boltzmann Machines, which can facilitate neuro-symbolic learning and reasoning.
    \item A foundation for the employment of statistical inference methods to perform logical reasoning.
\end{itemize}
The remainder of the paper is organised as follows. In the next section, we review the related work. Section \ref{theory} describes and proves correctness of the mapping from any logical formula into SDNF and then RBMs. Section \ref{sub:reasoning} defines reasoning by sampling and energy minimization in RBMs. Section \ref{LBMsection} introduces the LBM system and evaluates scalability of reasoning with an increasing number of variables. Section \ref{exp} contains the experimental results on learning and the comparison with a symbolic, neurosymbolic and a purely-neural learning system. We then conclude the paper and discuss directions for future work. 

\section{Related Work}
\label{sec:background}
One of the earliest work on the integration of neural networks and symbolic knowledge is Knowledge-based Artificial Neural Network \cite{Towel_1994} which encodes {\it if-then} rules into a hierarchy of perceptrons. In another early approach \cite{Garcez_2001}, a single-hidden layer neural network with recurrent connections is proposed to support logic programming rules. An extension of that approach to work with first-order logic programs, called CILP++ \cite{Franca_2014}, uses the concept of {\it propositionalisation} from Inductive Logic Programming (ILP) whereby first-order variables can be treated as propositional atoms in the neural network. Also based on first-order logic programs, \cite{Evans_18} propose a differentiable ILP approach that can be implemented by neural networks, while \cite{Cohen_2017} maps stochastic logic programs into a differentiable function also trainable by neural networks. These are all supervised learning approaches.

Among unsupervised learning approaches, Penalty Logic \cite{Pinkas_1995} was the first work to integrate propositional and non-monotonic logic formulae into symmetric networks. However, it required the use of higher-order Hopfield networks which can become complicated to construct and inefficient to train with the learning algorithm of Boltzmann machines (BM). Such higher-order networks require transforming the energy function into a quadratic form by adding hidden variables not present in the original logic formulae for the purpose of building the network. More recently, several attempts have been made to extract and encode symbolic knowledge into RBMs \cite{Leo_2011,Son_2018}. These are based on the structural similarity between symmetric networks and bi-conditional logical statements and do not contain soundness results. By contrast, and similarly to Penalty Logic, the approach introduced in this paper is based on a proof of equivalence between the logic formulae and the symmetric networks, but without requiring higher-order networks.

Alongside the above approaches which translate from a symbolic to a neural representation, normally from if-then rules to a feedforward or recurrent neural network, there are also hybrid approaches, which combine neural networks with symbolic AI systems and logic operators, such as DeepProbLog \cite{Robin_2018} and Logic Tensor Networks (LTN) \cite{Serafini_2016}. While DeepProbLog adds a neural network module to probabilistic logic programming, LTN represents the level of truth of first-order logic statements in the neural network. LTN employs a discriminative approach to infer the level of truth rather than the generative approach adopted in this paper by LBM. Although a discriminative approach can be very useful for evaluating assignments of variables to truth-values, we argue that it is not adequate for the implementation of a search for the satisfying assignments of logical formulae. For this purpose, the use of a generative approach is needed as proposed in this paper with LBM.

\section{Knowledge Representation in RBMs}
\label{theory}
%\subsection{Propositional Calculus and RBMs}
\label{dnf_rbm}
An RBM \cite{Smolensky_1995} can be
seen as a two-layer neural network with bidirectional (symmetric) connections,
which is characterised by a function called the energy of the RBM:

{
\begin{equation}
\label{eq:rbm_en}
  \En(\vt{x},\vt{h}) = -\sum_{i,j} w_{ij}x_ih_j - 
\sum_{i}a_ix_i - \sum_jb_jh_j
\end{equation}
}

\noindent where $a_i$ and $b_j$ are the biases of input unit $x_i$ and hidden unit $h_j$, respectively, and $w_{ij}$ is the connection weight between $x_i$ and $h_j$. This RBM represents a joint distribution $p(\vt{x},\vt{h})= \frac{1}{Z}e^{-\frac{1}{\tau}\En(\vt{x},\vt{h})}$ where $Z=\sum_{\vt{x}\vt{h}}e^{-\frac{1}{\tau}\En(\vt{x},\vt{h})}$ is the partition function and parameter $T$ is called the temperature of the RBM, $\vt{x} = \{x_i\}$ is the set of visible units and $\vt{h} = \{h_j\}$ is the set of hidden units in the RBM.

In propositional logic, any well-formed formula (WFF) $\varphi$ can be mapped into Disjunctive
Normal Form (DNF), i.e. disjunctions ($\vee$) of conjunctions ($\fzand$), as follows:
%\vskip -.2cm
{%\footnotesize %\scriptsize
\begin{equation*}
\varphi \equiv \bOr_j (\bAnd_{t \in \mathcal{S}_{T_j}} \pr{x}_t \fzand \bAnd_{k \in \mathcal{S}_{K_j}} \neg \pr{x}_{k})
\end{equation*}
}
%\vskip -.3cm
\noindent where $(\bAnd_{t \in \mathcal{S}_{T_j}} \pr{x}_t \fzand \bAnd_{k \in
\mathcal{S}_{K_j}} \neg \pr{x}_{k})$ is called a conjunctive clause, e.g. $\pr{x}_1 \fzand \pr{x}_2 \fzand \neg \pr{x}_3$. Here, we
denote the propositional variables (literals) as $\pr{x}_t$ for positive literals (e.g. $\pr{x}_1$), $\pr{x}_k$ for negative literals (e.g. $\neg \pr{x}_3$); $\mathcal{S}_{T_j}$ and $\mathcal{S}_{K_j}$ denote, respectively, the sets of $T_j$ indices of the positive literals and $K_j$ indices of the negative literals in the formula. This notation may seem over-complicated but it will be useful in the proof of soundness of our translation from SDNF to RBMs.   

\begin{definition}
Let $s_\varphi(\vt{x}) \in \{0,1\}$ denote the truth-value of a WFF $\varphi$ given an assignment of truth-values $\vt{x}$ to the literals of $\varphi$ with truth-value $True$ mapped to 1 and truth-value $False$ mapped to 0. Let $\En(\vt{x},\vt{h})$ denote the energy function of an energy-based neural network $\mathcal{N}$ with visible units $\vt{x}$ and hidden units $\vt{h}$. $\varphi$ is said to be \emph{equivalent} to $\mathcal{N}$ if and only if for any assignment $\vt{x}$ there exists a function $\psi$ such that $s_\varphi(\vt{x}) = \psi(\En(\vt{x},\vt{h}))$.
%where $A>0$ and $B$ are constants and $\En_{rank}(\vt{x}) = min_\vt{h}\En(\vt{x},\vt{h})$ is the rank of the energy function of $\mathcal{N}$ minimised over all hidden units $\vt{h}$.

\end{definition} 
% Explain this equivalance
This definition of equivalence is similar to that of
 Penalty Logic \cite{Pinkas_1995}, whereby all assignments of truth-values satisfying a WFF $\varphi$ are mapped to global minima of the energy function of network $\mathcal{N}$. In our case, by construction, assignments that do not satisfy the WFF are mapped to maxima of the energy function.

\begin{definition}
A \emph{strict DNF} (SDNF) is a DNF with at most one conjunctive clause that maps to $True$ for any assignment of truth-values $\vt{x}$. 
A \emph{full DNF} is
    a DNF where each propositional variable must appear at least once in every
    conjunctive clause. 
\end{definition}

\begin{lemma}
\label{lem:wff2ecs}
Any SDNF $\varphi \equiv \bOr_j (\bAnd_{t \in \mathcal{S}_{T_j}}
\pr{x}_t \fzand \bAnd_{k \in \mathcal{S}_{K_j}} \neg \pr{x}_{k})$ can
be mapped onto an energy function:
 
\begin{equation*}
\En(\vt{x}) = -\sum_{j} (\prod_{t \in \mathcal{S}_{T_j}} x_t \prod_{k \in \mathcal{S}_{K_j}} (1 - x_{k}))
\end{equation*}

\noindent where $\mathcal{S}_{T_j}$ (resp. $\mathcal{S}_{K_{j}}$) is the set of
$T_j$ (resp. $K_j$) indices of the positive (resp. negative) literals in $\varphi$.
\end{lemma}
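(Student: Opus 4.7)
The plan is to reduce the claim to a bookkeeping identity between the $\{0,1\}$-valued truth function of the SDNF and the polynomial on the right-hand side, and then show this polynomial is (minus) the energy function required by the definition of equivalence.

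First I would treat a single conjunctive clause. For any assignment $\vt{x}\in\{0,1\}^n$ and any clause $C_j = \bigwedge_{t\in\mathcal{S}_{T_j}} \pr{x}_t \wedge \bigwedge_{k\in\mathcal{S}_{K_j}} \neg \pr{x}_k$, I would observe that $s_{\pr{x}_t}(\vt{x}) = x_t$ and $s_{\neg \pr{x}_k}(\vt{x}) = 1 - x_k$, both in $\{0,1\}$. Since a conjunction of $\{0,1\}$-valued terms is their product, this immediately gives
\begin{equation*}
s_{C_j}(\vt{x}) \;=\; \prod_{t\in\mathcal{S}_{T_j}} x_t \prod_{k\in\mathcal{S}_{K_j}} (1-x_k) \;\in\;\{0,1\}.
\end{equation*}
This factor is $1$ exactly when every positive literal of $C_j$ is set to $1$ and every negative literal of $C_j$ is set to $0$, i.e.\ exactly when $C_j$ is satisfied by $\vt{x}$.

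Next I would invoke the SDNF hypothesis to collapse the disjunction to a sum. By the definition of SDNF, at most one clause $C_j$ evaluates to $True$ under any $\vt{x}$, so the terms $s_{C_j}(\vt{x})$ are mutually exclusive indicators. Hence
\begin{equation*}
s_\varphi(\vt{x}) \;=\; \bigvee_j s_{C_j}(\vt{x}) \;=\; \sum_j s_{C_j}(\vt{x}) \;=\; \sum_j \prod_{t\in\mathcal{S}_{T_j}} x_t \prod_{k\in\mathcal{S}_{K_j}} (1-x_k),
\end{equation*}
which is precisely $-\En(\vt{x})$ as defined in the statement.

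Finally I would match this with the equivalence definition. Taking $\psi(e) := -e$ gives $s_\varphi(\vt{x}) = \psi(\En(\vt{x}))$ for every assignment, and moreover $\En(\vt{x})\in\{-1,0\}$ with value $-1$ on the satisfying assignments and $0$ on the falsifying ones, so the satisfying assignments are exactly the global minima and the falsifying ones are the maxima, as claimed in the paper's remark after the equivalence definition. The only delicate step is the collapse of $\vee$ to $+$: I would emphasise that this is where the strictness of the SDNF is essential, since in a general DNF two clauses could both evaluate to $True$ and the sum would overcount, producing an energy below $-1$ and breaking the clean correspondence.
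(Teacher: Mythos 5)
Your proposal is correct and follows essentially the same route as the paper's own proof: represent each conjunctive clause as the product $\prod_{t} x_t \prod_{k}(1-x_k)$, use the strictness of the SDNF to replace the disjunction by a sum of mutually exclusive indicators, and conclude $s_\varphi(\vt{x}) = -\En(\vt{x})$. Your explicit remark that strictness is exactly what licenses the collapse of $\vee$ to $+$ is a welcome clarification of a point the paper states more tersely, but it is not a different argument.
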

\begin{proof}
%By definition, $\varphi \equiv \bOr_j (\bAnd_{t\in \mathcal{S}_{T_j}}
%\pr{x}_t \fzand \bAnd_{k \in \mathcal{S}_{K_j}} \neg
%\pr{x}_{k})$. 
Each conjunctive clause $\bAnd_{t\in \mathcal{S}_{T_j}}
\pr{x}_t \fzand \bAnd_{k \in \mathcal{S}_{K_j}} \neg \pr{x}_{k}$ in $\varphi$
can be represented by $\prod_{t\in \mathcal{S}_{T_j}} x_t \prod_{k\in
  \mathcal{S}_{K_j}} (1-x_{k})$ which maps to $1$ if and only if
$x_t=1$ (i.e. $True$) and $x_{k}=0$ (i.e. $False$) for all
$t \in \mathcal{S}_{T_j}$ and $k \in \mathcal{S}_{K_j}$. Since
$\varphi$ is a SDNF, it is $True$ if and only if one conjunctive
clause is $True$. Then, the sum $\sum_{j}( \prod_{t\in
  \mathcal{S}_{T_j}} x_t \prod_{k \in \mathcal{S}_{K_j}} (1-x_{k}))=1$
if and only if the assignment of truth-values to $x_t$, $x_{k}$ is a model of $\varphi$. Hence, the neural network with energy
function $\En=-\sum_{j} (\prod_{t\in \mathcal{S}_{T_j}} x_t \prod_{k
  \in \mathcal{S}_{K_j}} (1-x_k))$ is such that $s_\varphi(\vt{x}) = -
\En(\vt{x})$.
\end{proof}
\begin{table}[ht]
    \centering

    \begin{tabular}{|c|c|c||c|c|}
      \hline
      \hline
      $\pr{x}$     &   $\pr{y}$   & $\pr{z}$     & $s_\varphi(\pr{x},\pr{y},\pr{z})$  &  $\En(x,y,z)$\\
      \hline
      $False$ & $False$ & $False$ & $1$              &  $-1$\\
      \hline
      $False$ & $False$ & $True$  & $0$             &  $0$\\
      \hline
      $False$ & $True$ & $False$  & $0$             &  $0$\\
      \hline
      $False$ & $True$  & $True$  & $1$              &  $-1$\\
      \hline
      $True$  & $False$ & $False$ & $0$             &  $0$\\
     \hline
      $True$  & $False$ & $True$  & $1$              &  $-1$\\
      \hline
      $True$  & $True$  & $False$ & $1$              &  $-1$\\
      \hline
      $True$  & $True$  & $True$  & $0$             &  $0$\\
      \hline
    \end{tabular}
    
    \caption{Energy function and truth-table for the formula $((\pr{x} \wedge \neg \pr{y}) \vee (\neg \pr{x} \wedge \pr{y})) \fziff \pr{z}$; we use the symbol $\xor$ to denote exclusive-or, that is $\pr{x} \xor \pr{y} \equiv ((\pr{x} \wedge \neg \pr{y}) \vee (\neg \pr{x} \wedge \pr{y}))$.}
    \label{tab:en_xor_high_bm}
  \end{table}
\begin{example} 
\label{exp:xor2en}
The formula $\varphi \equiv (\pr{x} \xor \pr{y}) \fziff \pr{z}$, c.f. Table \ref{tab:en_xor_high_bm}, can be converted into a SDNF as follows:

\begin{equation*}
\varphi \equiv (\neg \pr{x} \fzand \neg \pr{y} \fzand \neg \pr{z}) \fzor (\neg \pr{x}
\fzand \pr{y} \fzand \pr{z}) \fzor (\pr{x} \fzand \neg \pr{y} \fzand \pr{z}) \fzor (\pr{x} \fzand \pr{y}
\fzand \neg \pr{z})
\end{equation*}

For each conjunctive clause in $\varphi$, a corresponding expression is added to the energy function, e.g. $xy(1-z)$ corresponding to clause $\pr{x} \fzand \pr{y} \fzand \neg \pr{z}$. Hence, the energy function for $N$ equivalent to $\varphi$ becomes:

\begin{equation*}
\begin{aligned}
 \En = &-(1-x)(1-y)(1-z) - xy(1-z) - 
       &x(1-y)z - (1-x)yz
\end{aligned}          
\end{equation*}
 
\end{example}

We now show that any SDNF can be mapped onto an RBM.

\begin{theorem}
\label{theorem:prop_rbm} 
 Any SDNF $\varphi \equiv \bOr_j (\bAnd_{t \in \mathcal{S}_{T_j}}
 \pr{x}_t \fzand \bAnd_{k \in \mathcal{S}_{K_j}} \neg \pr{x}_{k})$ can
 be mapped onto an equivalent RBM with energy function
\begin{equation}
\En(\vt{x},\vt{h}) =
 -\sum_jh_j(\sum_{t \in \mathcal{S}_{T_j}} x_t - \sum_{k \in
   \mathcal{S}_{K_j}}x_{k} - |\mathcal{S}_{T_{j}}| + \epsilon)
\end{equation}

\noindent where $0<\epsilon<1$, $\mathcal{S}_{T_j}$ and  $\mathcal{S}_{K_{j}}$ are, respectively, the sets of indices of the positive and negative literals in each conjunctive clause $j$ of the SDNF, and $|\mathcal{S}_{T_{j}}|$ is the number of positive literals in conjunctive clause $j$.

% $\mathcal{S}_{T_j}$ and $\mathcal{S}_{K_{j}}$ are the sets of $T_j$
% and $K_j$ indices of the positive and negative
% literals in $\varphi$.
%$\mathcal{S}_{T_j}$, $\mathcal{S}_{K_{j}}$ are respectively the sets of indices of positive and negative propositions of each conjunctive clause $j$ in the SDNF; $T_j$ is the size of $\mathcal{S}_{T_{j}}$.

\end{theorem}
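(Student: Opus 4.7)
The plan is to build the RBM with one hidden unit $h_j$ per conjunctive clause of $\varphi$. Reading off the proposed energy, the visible-to-hidden weights are $+1$ on each $x_t$ with $t \in \mathcal{S}_{T_j}$, $-1$ on each $x_{k}$ with $k \in \mathcal{S}_{K_j}$, and the bias on $h_j$ is $\epsilon - |\mathcal{S}_{T_j}|$, so the net input to $h_j$ is
$$I_j(\vt{x}) = \sum_{t \in \mathcal{S}_{T_j}} x_t - \sum_{k \in \mathcal{S}_{K_j}} x_{k} - |\mathcal{S}_{T_j}| + \epsilon.$$

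The first step is a case analysis on $I_j(\vt{x})$. If the $j$-th clause is satisfied by $\vt{x}$ then every $x_t = 1$ and every $x_{k} = 0$, giving $I_j(\vt{x}) = \epsilon > 0$. Otherwise some positive literal is $0$ or some negative literal is $1$, and the integrality of $\vt{x} \in \{0,1\}^n$ forces $\sum_t x_t - \sum_k x_{k} \le |\mathcal{S}_{T_j}| - 1$, whence $I_j(\vt{x}) \le \epsilon - 1 < 0$. The role of choosing $0 < \epsilon < 1$ is precisely to open this strict gap in sign.

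Next I would minimise the energy over $\vt{h}$. Because $\En(\vt{x},\vt{h}) = -\sum_j h_j I_j(\vt{x})$ decouples across $j$ (RBMs have no hidden-to-hidden connections), the minimiser sets $h_j = 1$ when $I_j(\vt{x}) > 0$ and $h_j = 0$ otherwise, contributing $-\epsilon$ in the first case and $0$ in the second. By the SDNF property at most one clause of $\varphi$ can be satisfied by any $\vt{x}$, so at most one $I_j$ is positive and
$$\min_{\vt{h}} \En(\vt{x},\vt{h}) = -\epsilon\, s_\varphi(\vt{x}).$$
Taking $\psi(e) = -e/\epsilon$ after minimisation over $\vt{h}$ then delivers equivalence in the sense of Definition~1.

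The only substantive obstacle is the integrality step in the case analysis: one needs to argue that $\sum_t x_t - \sum_k x_{k}$ cannot creep above $|\mathcal{S}_{T_j}|-1$ on non-satisfying assignments, which is exactly why $\epsilon$ must be strictly between $0$ and $1$. Everything downstream is bookkeeping, made simple by the RBM's factorised form over $\vt{h}$. As a sanity check I would observe that the minimum-over-$\vt{h}$ energy computed above is precisely $\epsilon$ times the higher-order energy of Lemma~\ref{lem:wff2ecs}, so the theorem can be read as a lossless compilation of that higher-order energy into a quadratic (RBM) form at the cost of one hidden unit per conjunctive clause.
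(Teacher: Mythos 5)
Your proof is correct and follows essentially the same route as the paper's: one hidden unit per conjunctive clause, a sign analysis of the net input $I_j(\vt{x})$ showing it equals $\epsilon$ exactly when clause $j$ is satisfied and is at most $\epsilon-1<0$ otherwise, then minimisation over the decoupled $h_j$ and an appeal to the SDNF property to get $\min_{\vt{h}}\En(\vt{x},\vt{h})=-\epsilon\, s_\varphi(\vt{x})$. If anything, you are slightly more careful than the paper, which asserts the negative-input case without spelling out the integrality argument that you (rightly) identify as the one substantive step.
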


\begin{proof} 
We have seen in Lemma \ref{lem:wff2ecs} that any SDNF $\varphi$ can be
mapped onto energy function $\En = -\sum_{j} \prod_{t\in \mathcal{S}_{T_j}}
x_t \prod_{k \in \mathcal{S}_{K_j}} (1-x_{k}) $. %Let us denote $T_j$ as the number of positive literals in a conjunctive clause $j$. 
For each expression
$\tilde{e}_j(\vt{x}) = -\prod_{t \in \mathcal{S}_{T_j}} x_t \prod_{k \in \mathcal{S}_{K_j}}
(1-x_{k})$, we define an energy expression associated with hidden unit
$h_j$ as $e_j(\vt{x},h_j) = -h_j(\sum_{t \in \mathcal{S}_{T_j}} x_t
- \sum_{k \in \mathcal{S}_{K_j}}x_{k} - |\mathcal{S}_{T_{j}}| + \epsilon)$. $e_j(\vt{x},h_j)$ is minimized with value $-\epsilon$ when $h_j=1$, written $min_{h_j}(e_j(\vt{x},h_j)) = -\epsilon$. This is because $-(\sum_{t \in \mathcal{S}_{T_j}}
x_t - \sum_{k \in \mathcal{S}_{K_j}} x_{k} -|\mathcal{S}_{T_{j}}| + \epsilon) = -\epsilon$ if and only if $x_t=1$ and
$x_{k}=0$ for all $t \in \mathcal{S}_{T_j}$ and $k \in \mathcal{S}_{K_j}$.
%$\tilde{e}_j(\vt{x}) = \frac{e_{j \text{ }
%rank}(\vt{x})}{\epsilon}$, where $e_{j \text{ }
%rank}(\vt{x})=min_{h_j}e_j(\vt{x},h_j)$. 
Otherwise, $-(\sum_t x_{t \in \mathcal{S}_{T_j}}
- \sum_{k \in \mathcal{S}_{K_j}} x_{k} -|\mathcal{S}_{T_{j}}| +\epsilon) >0$ and  $min_{h_j}(e_j(\vt{x},h_j)) =
0$ with $h_j=0$. By repeating this process for each
$\tilde{e}_j(\vt{x})$ we obtain that any SDNF $\varphi$ is
equivalent to an RBM with the energy function:
%\vskip -.2cm
{%\footnotesize %\scriptsize
 $
   \label{eq:prop_rbm_en}
   \En(\vt{x},\vt{h}) = -\sum_jh_j(\sum_{t \in \mathcal{S}_{T_j}} x_t - \sum_{k \in \mathcal{S}_{K_j}}x_{k}  -  |\mathcal{S}_{T_{j}}| + \epsilon)  
 $
 }
 %\vskip -.2cm
such that $s_\varphi(\vt{x}) = -\frac{1}{\epsilon} min_{\vt{h}}\En(\vt{x},\vt{h})$.
\end{proof}

%\begin{example}
%The XOR formula $(\pr{x} \xor \pr{y}) \fziff \pr{z}$ can be converted into a SDNF as:

%{%\scriptsize
%\begin{equation*}
%\varphi \equiv (\neg \pr{x} \fzand \neg \pr{y} \fzand \neg \pr{z}) \fzor (\neg \pr{x}
%\fzand \pr{y} \fzand \pr{z}) \fzor (\pr{x} \fzand \neg \pr{y} \fzand \pr{z}) \fzor (\pr{x} \fzand \pr{y}
%\fzand \neg \pr{z})
%\end{equation*}
%}
%\vskip -.2cm
%For each conjunctive clause, for example $\pr{x} \fzand \pr{y} \fzand \neg \pr{z}$ we
%create a term $xy(1-z)$ and add it to the energy function. After all
%terms are added, we have the energy function of a RBM as:
%\vskip -.3cm
%{\scriptsize
%\begin{equation*}
%\begin{aligned}
% \En = &-(1-x)(1-y)(1-z) - xy(1-z) \\
%       &-x(1-y)z - (1-x)yz
%\end{aligned}          
%\end{equation*}
%}

\begin{figure}[ht]
\centering
%\begin{subfigure}{0.4\textwidth}
\includegraphics[width=.5\textwidth]{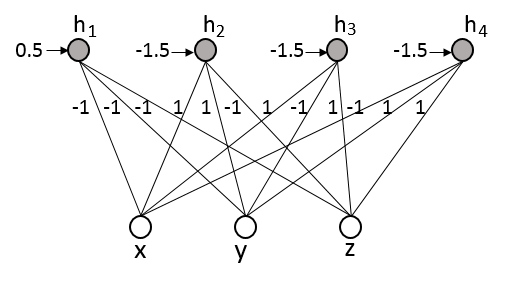}
\caption{An RBM equivalent to the XOR formula $(\pr{x} \xor \pr{y}) \fziff \pr{z}$.}
\label{dnf_xor}
%\label{fig:xor2rbm}
%\vskip -.7cm
\end{figure}

\vspace{0.3cm}
Applying Theorem \ref{theorem:prop_rbm}, an RBM for the XOR formula $\varphi \equiv (\pr{x} \xor \pr{y}) \fziff \pr{z}$ can be built as shown in Figure \ref{dnf_xor}. We choose $\epsilon = 0.5$. The energy function of this RBM is:
%\vskip -.5cm
{%\scriptsize
\begin{equation*} 
  \begin{aligned} \En &= -h_1(-x - y - z + 0.5) - h_2(x + y -  z - 1.5) -\\
  & h_3(x+y-z-1.5) - h_4( -x + y + z - 1.5)
  \end{aligned}
\end{equation*}
}

%Table \ref{tab:en_xor_equip} shows the equivalence between $min_{\vt{h}}\En(\vt{x},\vt{h})$ and the truth-table values of the XOR.
%\begin{table}[ht]
%    \centering
%    {\scriptsize
%    \begin{tabular}{|c|c|c||c|c|}
%      \hline
%      \hline
%      $\pr{x}$     &   $\pr{y}$   & $\pr{z}$     & $s_\varphi(\pr{x},\pr{y},\pr{z})$  &  $min_{\vt{h}}\En(\vt{x},\vt{h})$ \\
%      \hline
%      $0$ & $0$ & $0$ & $True$              &  $-0.5$\\
%      \hline
%      $0$ & $0$ & $1$ & $False$             &  $0$\\
%      \hline
%      $0$ & $1$ & $0$ & $False$             &  $0$\\
%      \hline
%      $0$ & $1$ & $1$ & $True$              &  $-0.5$\\
%      \hline
%      $1$ & $0$ & $0$ & $False$             &  $0$\\
%      \hline
%      $1$ & $0$ & $1$  & $True$              &  $-0.5$\\
%      \hline
%      $1$ & $1$ & $0$ & $True$              &  $-0.5$\\
%      \hline
%      $1$ & $1$ & $1$  &$False$             &  $0$\\
%      \hline
%    \end{tabular}
%    }
%    \caption{The minimised energy function of RBM and the truth table of XOR formula}
%    \label{tab:en_xor_equip}
    %\vskip -.7cm
%  \end{table}

For comparison, one may construct an RBM for XOR using Penalty Logic, as follow. First, we compute the higher-order energy function:
$
\label{eq:xor_en}
\En^p =  4xyz - 2xy - 2xz - 2yz + x +y +z,
$
then we transform it to quadratic form by adding a hidden variable $h_1$ to obtain:
$
\En^p = 2xy - 2xz - 2yz - 8xh_1 - 8yh_1 + 8zh_1 + x + y + z + 12h_1,
$
which is not an energy function of an RBM, so we keep adding hidden variables until the energy function of an RBM might be obtained, in this case:$
\En^p = -8xh_1 - 8yh_1 + 8zh_1 + 12h_1 
    -4xh_2 + 4yh_2 + 2h_2 - 4yh_3 - 4zh_3 + 6h_3 -4xh_4 - 4zh_4 + 6h_4  + 3x + y + z.
$

%which is the RBMs in \fgref{pen_xor}.
%\end{example}

%AG: I really like the above example and comparison with Pinkas. Maybe it can be included in a future journal version of the paper...

The above example illustrates in a simple case the value of using SDNF, in that it produces a direct translation into efficient RBMs by contrast with existing approaches. Next, we discuss the challenges of the conversion of WFFs into SDNF.

\textbf{Representation Capacity:} It is well-known that any formula $\varphi$ can be converted into DNF. If $\varphi$ is not SDNF then by definition there is a group of conjunctive clauses in $\varphi$ which map to $True$ when $\varphi$ is satisfied. This group of conjunctive clauses can always be converted into a full DNF which is also a SDNF. Therefore, any WFF can be converted into SDNF. From Theorem \ref{theorem:prop_rbm}, it follows that any WFF can be represented by the energy of an RBM. For example, $a \vee b$ becomes $(a \wedge \neg b) \vee (\neg a \wedge b) \vee (a \wedge b)$. We now describe a method for converting logical formulae into SDNF, which we use in the empirical evaluations that follow.

Let us consider a clause:
{%\footnotesize
\begin{equation}
\label{eq:cl}
\gamma \equiv \bOr_{t\in \mathcal{S}_T} \neg \pr{x}_t \fzor \bOr_{k\in \mathcal{S}_K} \pr{x}_k
\end{equation}
}
which can be rearranged as $\gamma \equiv \gamma' \fzor \pr{x}'$, where $\gamma'$ is a disjunctive clause obtained by removing $\pr{x}'$ from $\gamma$. $\pr{x}'$ can be either $\neg \pr{x}_t$ or $\pr{x}_k$ for any $t \in \mathcal{S}_T$ and $k \in \mathcal{S}_K$. We have: 
\begin{equation}
\label{conj_ext}
\gamma  \equiv (\neg \gamma' \fzand \pr{x}') \fzor \gamma'
\end{equation}
because $(\neg \gamma' \fzand \pr{x}') \fzor \gamma' \equiv (\gamma' \fzor \neg \gamma') \fzand (\gamma' \fzor \pr{x}') \equiv True \fzand (\gamma' \fzor \pr{x}')$. By De Morgan's law ($\neg( \pr{a}\fzor \pr{b})\equiv \neg \pr{a}\fzand\neg\pr{b}$), we can always convert $\neg \gamma'$ (and therefore $\neg \gamma' \fzand \pr{x}'$) into a conjunctive clause. 

By applying \eqref{conj_ext} repeatedly, each time we can eliminate a variable out of a disjunctive clause by moving it into a new conjunctive clause. The disjunctive clause $\gamma$ holds true if and only if either the  disjunctive clause $\gamma'$ holds true or the conjunctive clause ($\neg \gamma' \fzand \pr{x}'$) holds true. 

As an example, consider the application of the transformation above to an if-then rule (logical implication):

{%\footnotesize %\scriptsize
\begin{equation}
\label{horn}
\pr{y} \lif (\bAnd_{t\in \mathcal{S}_T} \pr{x}_{t} \fzand \bAnd_{k \in \mathcal{S}_K} \neg \pr{x}_k)
\end{equation}
}

% Now apply the idea of
The logical implication is converted to DNF:

{%\footnotesize %\scriptsize
\begin{equation}
\label{logimp_dnf}
(\pr{y} \fzand \bAnd_{t\in \mathcal{S}_T} \pr{x}_t \fzand \bAnd_{k\in \mathcal{S}_K} \neg \pr{x}_k) \fzor (\bOr_{t\in \mathcal{S}_T} \neg \pr{x}_t \fzor \bOr_{k\in \mathcal{S}_K} \pr{x}_k)
\end{equation}
}

\noindent Applying the variable elimination method in \eqref{conj_ext} to all variables in the clause $\bOr_{t\in \mathcal{S}_T} \neg \pr{x}_t \fzor \bOr_{k\in \mathcal{S}_K} \pr{x}_k$, we obtain the SDNF of the logical implication as:

{%\footnotesize%\scriptsize
\begin{equation}
\label{short_horn_dnf}
\begin{aligned}
&(\pr{y} \fzand \bAnd_{t\in \mathcal{S}_T}  \pr{x}_t  \bAnd_{k\in \mathcal{S}_K} \neg \pr{x}_k) 
& \fzor \bOr_{p \in \mathcal{S}_T \cup \mathcal{S}_K} (\bAnd_{t\in\mathcal{S}_T\backslash p} \pr{x}_t \fzand  \bAnd_{k\in\mathcal{S}_K\backslash p} \neg\pr{x}_k \fzand \pr{x}'_{p})
\end{aligned}
\end{equation}
 }
 %\vskip -.2cm
 where $\mathcal{S} .\backslash p$ denotes a set $\mathcal{S}$ from which $p$ has been removed. $\pr{x}'_p\equiv\neg\pr{x}_p$ if $p\in\mathcal{S}_T$. Otherwise, $\pr{x}'_p\equiv\pr{x}_p$. This SDNF only has $|\mathcal{S}_T| + |\mathcal{S}_K| +1$ clauses, making translation to an RBM very efficient. For example, using this method, the SDNF of $\pr{y} \lif (\pr{x}_1 \fzand \pr{x}_2 \fzand \neg \pr{x}_3)$ is $\text{ } (\pr{y} \fzand \pr{x}_1 \fzand \pr{x}_2 \fzand \neg \pr{x}_3) \fzor (\pr{x}_1 \fzand \pr{x}_2 \fzand \pr{x}_3) \fzor (\pr{x}_1 \fzand \neg \pr{x}_2) \fzor \neg \pr{x}_1 $. We need an RBM with only 4 hidden units to represent this SDNF.\footnote{Of course, the number of hidden units will grow exponentially with the number of disjuncts in $y$ (typically not allowed in logic programming), e.g. if $y \equiv (a \vee b \vee c)$ then the full DNF will have seven conjunctive clauses.}
 
\section{Reasoning in RBMs}
\label{sub:reasoning}
\subsection{Reasoning as Sampling}

There is a direct relationship between inference in RBMs and logical satisfiability, as follows.
\begin{proposition}
  \label{prop:gibbs_sat}
Let $\mathcal{N}$ be an RBM constructed from a formula $\varphi$. Let $\mathcal{A}$ be a set of indices of variables that have been assigned to either True or False (we use $\vt{x}_\mathcal{A}$ to denote the set $\{x_\alpha|\alpha \in \mathcal{A}\}$). Let $\mathcal{B}$ be a set of indices of variables that have not been assigned a truth-value (we use $\vt{x}_\mathcal{B}$ to denote $ \{x_\beta|\beta \in \mathcal{B}\}$). Performing Gibbs sampling on $\mathcal{N}$ given $\vt{x}_\mathcal{A}$ is equivalent to searching for an assignment of truth-values for $\vt{x}_\mathcal{B}$ that satisfies $\varphi$.
\end{proposition}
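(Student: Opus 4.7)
The plan is to reduce the probabilistic claim to the deterministic energy characterization already established by Theorem~\ref{theorem:prop_rbm} and then invoke the standard convergence behavior of Gibbs sampling on an RBM. By Theorem~\ref{theorem:prop_rbm}, $s_\varphi(\vt{x}) = -\frac{1}{\epsilon}\min_{\vt{h}}\En(\vt{x},\vt{h})$, so the models of $\varphi$ are exactly the assignments whose free energy $\FreeEnergy(\vt{x}) = -\tau\log\sum_{\vt{h}} e^{-\En(\vt{x},\vt{h})/\tau}$ is globally minimized, while non-models have strictly larger free energy. This is the hook I would exploit.

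First, I would recall that clamped Gibbs sampling over $(\vt{x}_\mathcal{B},\vt{h})$ with $\vt{x}_\mathcal{A}$ held fixed is an ergodic Markov chain whose stationary distribution is the conditional Boltzmann law $p(\vt{x}_\mathcal{B},\vt{h}\mid\vt{x}_\mathcal{A}) \propto \exp(-\En(\vt{x},\vt{h})/\tau)$, where $\vt{x}=\vt{x}_\mathcal{A}\cup\vt{x}_\mathcal{B}$. Marginalising out $\vt{h}$ yields $p(\vt{x}_\mathcal{B}\mid\vt{x}_\mathcal{A})\propto \exp(-\FreeEnergy(\vt{x})/\tau)$, so the modes of the sampled distribution are precisely the completions $\vt{x}_\mathcal{B}$ that minimize $\FreeEnergy(\vt{x}_\mathcal{A}\cup\vt{x}_\mathcal{B})$. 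Second, I would combine this with Theorem~\ref{theorem:prop_rbm} to identify those minimizers: whenever a satisfying extension of $\vt{x}_\mathcal{A}$ exists, the minimum is $\approx -\epsilon$ and is attained exactly on the models of $\varphi$ consistent with $\vt{x}_\mathcal{A}$; otherwise the conditional free-energy landscape is flat at $\approx 0$ and no completion is energetically preferred. Consequently, Gibbs sampling drawn to stationarity returns (with overwhelming mass) exactly those completions of $\vt{x}_\mathcal{A}$ that a satisfiability search over $\vt{x}_\mathcal{B}$ would output.

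The main obstacle is pinning down the word \emph{equivalent}, since Gibbs sampling is a stochastic process with finite mixing time rather than an exact enumerator, and the free-energy smoothing by $\tau$ introduces a $\log\sum_{\vt{h}}$ correction that has to be controlled. I would handle both by appealing to the low-temperature concentration of the Boltzmann law: as $\tau\to 0$ the conditional distribution concentrates uniformly on the argmin set, which by the previous step coincides with the set of satisfying extensions of $\vt{x}_\mathcal{A}$. For any sufficiently small fixed $\tau$, the gap between satisfying and non-satisfying conditional free energies remains bounded below by a strictly positive constant determined by $\epsilon$ and by the SDNF structure (in which at most one hidden unit is ever activated per satisfied clause), so the sampler's high-probability outputs form exactly the satisfying completions sought by the search. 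When $\vt{x}_\mathcal{A}$ admits no satisfying extension, the uniform behaviour of the chain over $\vt{x}_\mathcal{B}$ faithfully reflects the failure of the search, closing the correspondence.
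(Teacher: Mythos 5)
Your argument is sound and reaches the paper's conclusion, but by a genuinely different route. The paper's proof stays at the level of single transitions: it writes the energy gradients $\partial(-\En)/\partial h_j$ and $\partial(-\En)/\partial x_\beta$, observes that the Gibbs conditionals $p(h_j\mid\vt{x})$ and $p(x_\beta\mid\vt{h})$ are sigmoids of exactly those gradients, and concludes that each sampling step is a randomized move in the direction of decreasing energy, hence (via Theorem~\ref{theorem:prop_rbm} and the $\argmin$/$\argmax$ identity) toward a satisfying assignment. You instead argue globally: clamped Gibbs sampling is an ergodic chain whose stationary law is $p(\vt{x}_\mathcal{B}\mid\vt{x}_\mathcal{A})\propto e^{-\FreeEnergy/\tau}$ after marginalizing $\vt{h}$, and the low-temperature concentration of that law, together with the strictly positive free-energy gap between satisfying and non-satisfying completions guaranteed by the SDNF construction (at most one hidden unit active, gap governed by $\epsilon$), puts the sampler's mass exactly on the satisfying extensions of $\vt{x}_\mathcal{A}$. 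Your version buys a stronger statement --- it explains why the chain actually \emph{finds} the minimizers rather than merely tending locally toward them, and it handles the unsatisfiable case explicitly, which the paper does not address. The paper's version buys a tighter link to the mechanics of Equation~\eqref{eq:gibbs} and supports its follow-up remark that each individual step moves closer to satisfaction. One caveat on your side: the identification of free-energy minimizers with models holds only for small enough $\tau$ (or large enough confidence $c$), since $\FreeEnergy$ differs from $\min_{\vt{h}}\En$ by the softplus smoothing; you flag and control this, which the paper leaves implicit, but a fully rigorous statement of the proposition would need to quantify that threshold. Neither proof is fully formal --- the proposition's notion of ``equivalent to searching'' is itself informal --- so your treatment is an acceptable, arguably more principled, substitute.
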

\begin{proof}
 Theorem \ref{theorem:prop_rbm} has shown that the truth-value of $\varphi$ is inversely proportional to an RBM's rank function, that is:
 \begin{equation}
   s_\varphi(\vt{x}_\mathcal{B},\vt{x}_\mathcal{A}) \propto -
   min_{\vt{h}}\En(\vt{x}_{\mathcal{B}},\vt{x}_{\mathcal{A}},\vt{h})
 \end{equation}
Therefore, a value of $\vt{x}_\mathcal{B}$ that minimises the energy function also maximises the truth value, because:
 \begin{equation}
 \begin{aligned}
 \vt{x}_\mathcal{B}^* &= \argmin_{\vt{x}_\mathcal{B}} (\min_\vt{h}\En(\vt{x}_\mathcal{B},\vt{x}_\mathcal{A},\vt{h})) \\
 & = \argmin_{\vt{x}_\mathcal{B}} (-s_\varphi(\vt{x}_\mathcal{B},\vt{x}_\mathcal{A}))\\ 
 &= \argmax_{\vt{x}_\mathcal{B}} (s_\varphi(\vt{x}_\mathcal{B},\vt{x}_\mathcal{A}))
 \end{aligned}
 \end{equation}

 Now, we can consider an iterative process to search for truth-values $\vt{x}_\mathcal{B}^*$ by minimising an RBM's energy function. This can be done by using gradient descent to update the values of $\vt{h}$ and then $\vt{x}_\mathcal{B}$ one at a time (similarly to the contrastive divergence algorithm) to minimise $\En(\vt{x}_\mathcal{B},\vt{x}_\mathcal{A},\vt{h})$ while keeping the other variables ($\vt{x}_\mathcal{A}$) fixed. The alternating updates are repeated until convergence. Notice that the gradients amount to:
 \begin{equation}
   \label{eq:en_grad}
   \begin{aligned}
     \frac{\partial -\En(\vt{x}_\mathcal{B},\vt{x}_\mathcal{A},\vt{h})}{\partial h_j} &= \sum_{i\in \mathcal{A}\cup\mathcal{B}} x_iw_{ij} + b_j\\
     \frac{\partial -\En(\vt{x}_\mathcal{B},\vt{x}_\mathcal{A},\vt{h})}{\partial x_\beta} &= \sum_j h_jw_{\beta j} + a_\beta
    \end{aligned}
 \end{equation}

 In the case of Gibbs sampling, given the assigned variables $\vt{x}_\mathcal{A}$, the process starts with a random initialisation of $\vt{x}_\mathcal{B}$, and proceeds to infer values for the hidden units $h_j$ and then the unassigned variables $x_\beta$ in the visible layer of the RBM, using the conditional distributions $h_j \sim p(h_j|\vt{x})$ and $x_\beta \sim p(x_\beta|\vt{h})$, respectively, where $\vt{x}=\{\vt{x}_\mathcal{A},\vt{x}_\mathcal{B}\}$ and 
 
 \begin{equation}
   \label{eq:gibbs}
   \begin{aligned}
     p(h_j|\vt{x}) & = \frac{1}{1+e^{-\frac{1}{\tau}\sum_i x_i w_{ij}+b_j}}  = \frac{1}{1+e^{-\frac{1}{\tau} \frac{\partial -\En(\vt{x}_\mathcal{B},\vt{x}_\mathcal{A},\vt{h})}{\partial h_j}}}\\
     p(x_\beta|\vt{h}) & = \frac{1}{1+e^{-\frac{1}{\tau}\sum_j h_j w_{\beta j}+a_\beta}}  =  \frac{1}{1+e^{-\frac{1}{\tau} \frac{\partial -\En(\vt{x}_\mathcal{B},\vt{x}_\mathcal{A},\vt{h})}{\partial x_\beta}}}\\
   \end{aligned}
  \end{equation}
  
It can be seen from \eqref{eq:gibbs} that the distributions are monotonic functions of the negative energy's gradient over $\vt{h}$ and $\vt{x}_\mathcal{B}$. Therefore, performing Gibbs sampling on those functions can be seen as moving randomly towards a local point of minimum energy, or equivalently to an assignment of truth-values that satisfies the formula. %In the case of $\tau=0$, this process becomes deterministic.
\end{proof}
Since the energy function of the RBM and the satisfiability of the formula are inversely proportional, each step of Gibbs sampling to reduce the energy should intuitively generate a sample that is closer to satisfying the formula.

\subsection{Reasoning as Lowering Free Energy} \label{cond_dis} 

When the number of unassigned variables is not large such that the partition function can be calculated directly, one can infer the assignments of $\vt{x}_\mathcal{B}$ using the conditional distribution:
\begin{equation}
    P(\vt{x}_\mathcal{B}|\vt{x}_\mathcal{A}) = \frac{e^{-\mathcal{F}_{\mathcal{B}}(\vt{x}_\mathcal{A},\vt{x}_\mathcal{B})}}{\sum_{\vt{x}'_\mathcal{B}}e^{\mathcal{F}_{\mathcal{B}}}(\vt{x}_\mathcal{A},\vt{x}'_\mathcal{B})}
\end{equation}

\noindent where $\mathcal{F}_\mathcal{B}= - \sum_j (- \log(1+\exp(c \sum_{i\in \mathcal{A} \cup \mathcal{B}} w_{ij}x_i + b_j)))$ is known as the free energy. The free energy term $- \log(1+\exp(c \sum_{i\in \mathcal{A} \cup \mathcal{B}} w_{ij}x_i + b_j))$ is a negative softplus function scaled by a non-negative value $c$ called {\it confidence value}. It returns a negative output for a positive input and a close-to-zero output for a negative input. One can change the value of $c$ to make the function smooth as shown in Figure \ref{fig:softplus}. 

\begin{figure}[ht]
\centering
\includegraphics[width=0.45\textwidth]{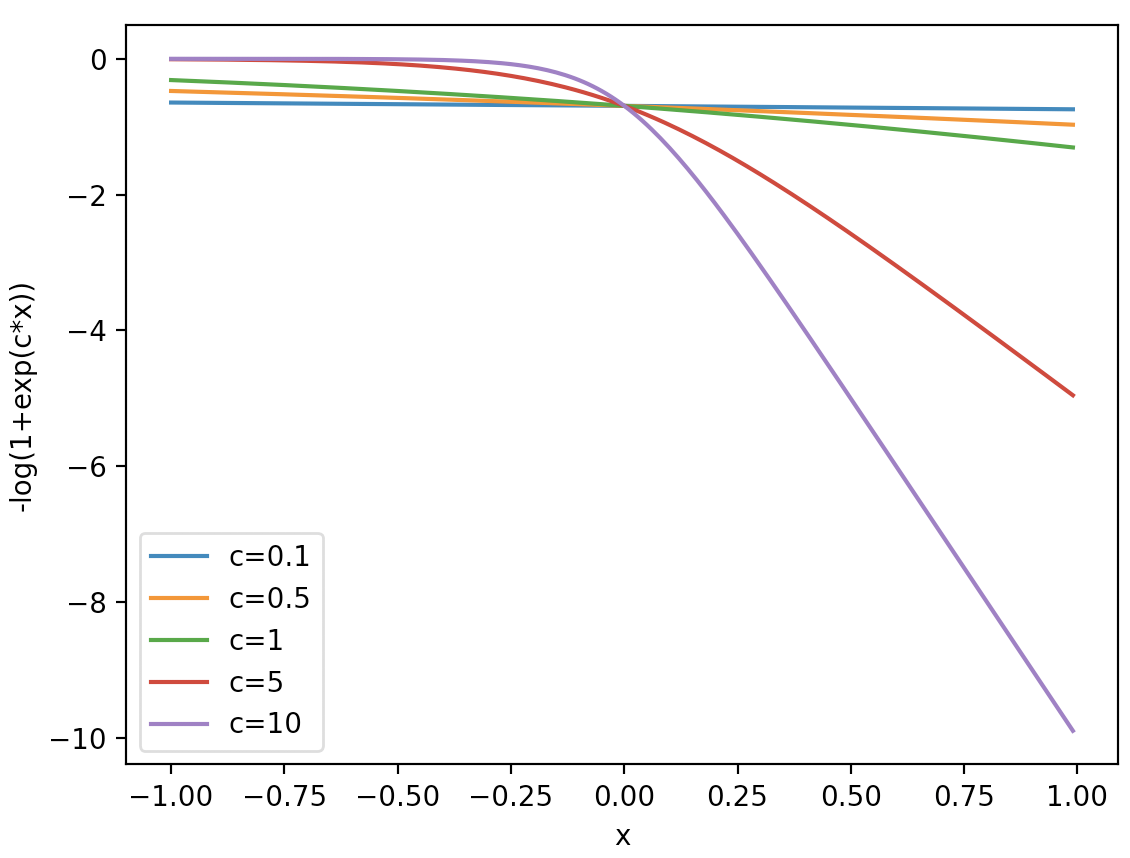}
\caption{Plots of free energy $-\log(1+\exp(cx))$ for various confidence values $c$.}
\label{fig:softplus}
\end{figure}

 Each free energy term is associated with a conjunctive clause in the SDNF through the weighted sum $\sum_{i\in \mathcal{A} \cup \mathcal{B}} w_{ij}x_i + b_j$. Therefore, if a truth-value assignment of  $\vt{x}_\mathcal{B}$ does not satisfy the formula, all energy terms will be close to zero. Otherwise, one free energy term will be $-\log(1+\exp(c\epsilon))$, for a choice of $0<\epsilon<1$ obtained from Theorem \ref{theorem:prop_rbm}. Thus, the more likely a truth assignment is to satisfy the formula, the lower the free energy. 
 \section{Logical Boltzmann Machines}
 \label{LBMsection}
Based on the previous theoretical results, we are now in position to introduce Logical Boltzmann Machines (LBM).  LBM is a neurosymbolic system that uses Restricted Boltzmann Machines for distributed reasoning and learning from data and knowledge.

The LBM system converts any set of formulae $\Phi = \{\varphi_1, ...,\varphi_n\}$ into an RBM by applying Theorem \ref{theorem:prop_rbm} to each formula
$\varphi_i \in \Phi$. In the case of Penalty Logic, formulae are weighted. Given a set of weighted
formulae $\Phi = \{w_1: \varphi_1, ..., w_n:\varphi_n\}$, one can also construct an equivalent RBM where each energy term generated from formula $\varphi_i$ is multiplied by $w_i$. In both cases, the assignments that minimise the energy of the RBM are the assignments that maximise the satifiability of $\Phi$, i.e. the (weighted) sum of the truth-values of the formula.  

\begin{proposition}
  \label{prop:rbm_lprogram} Given a weighted knowledge-base
$\Phi=\{w_1:\varphi_1,...,w_n:\varphi_n\}$, there exists an
equivalent RBM $\mathcal{N}$ such that
$s_\Phi(\vt{x}) = -\frac{1}{\epsilon}min_{\vt{h}}\En(\vt{x},\vt{h})$, where $s_\Phi(\vt{x})$ is the sum of the weights of the formulae in $\Phi$ that are satisfied by assignment $\vt{x}$.
\end{proposition}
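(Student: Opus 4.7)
The plan is to build $\mathcal{N}$ by stacking the RBMs obtained from each individual formula, weighting their energy contributions by the corresponding $w_i$, and exploiting the disjointness of the hidden layers to make the overall minimization decompose into independent subminimizations.

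First, I would convert each $\varphi_i$ into SDNF (possible by the Representation Capacity argument following Theorem \ref{theorem:prop_rbm}) and apply Theorem \ref{theorem:prop_rbm} to obtain, for each $i$, an RBM over the shared visible vector $\vt{x}$ together with its own private hidden vector $\vt{h}^{(i)}$ and energy function $\En_i(\vt{x},\vt{h}^{(i)})$ satisfying $s_{\varphi_i}(\vt{x}) = -\frac{1}{\epsilon}\min_{\vt{h}^{(i)}} \En_i(\vt{x},\vt{h}^{(i)})$. I would then define the composite RBM $\mathcal{N}$ as the union of visible units $\vt{x}$ and the disjoint hidden groups $\vt{h} = (\vt{h}^{(1)},\dots,\vt{h}^{(n)})$, with energy
\begin{equation*}
\En(\vt{x},\vt{h}) \;=\; \sum_{i=1}^{n} w_i \, \En_i(\vt{x},\vt{h}^{(i)}).
\end{equation*}
Since the weighting by $w_i$ simply rescales every connection weight and bias associated with the hidden units in group $i$, this remains a valid (bipartite) RBM energy.

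Next, because the hidden groups $\vt{h}^{(i)}$ are disjoint and each summand depends only on its own hidden group (and on $\vt{x}$), the minimization factorizes:
\begin{equation*}
\min_{\vt{h}} \En(\vt{x},\vt{h}) \;=\; \sum_{i=1}^{n} w_i \, \min_{\vt{h}^{(i)}} \En_i(\vt{x},\vt{h}^{(i)}) \;=\; -\epsilon \sum_{i=1}^{n} w_i \, s_{\varphi_i}(\vt{x}) \;=\; -\epsilon \, s_\Phi(\vt{x}),
\end{equation*}
where the middle equality uses the per-formula equivalence from Theorem \ref{theorem:prop_rbm}, and the last equality is the definition of $s_\Phi$. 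Dividing by $-\epsilon$ yields the required identity $s_\Phi(\vt{x}) = -\frac{1}{\epsilon}\min_{\vt{h}} \En(\vt{x},\vt{h})$.

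The only subtle step is the decomposition of the minimization, which relies crucially on keeping the hidden groups for different formulae disjoint; if hidden units were shared across formulae, the $w_i$-weighted sum would couple the subproblems and the argument would break down. Assuming non-negative weights $w_i \geq 0$ (so that minimizing each $w_i \En_i$ coincides with minimizing $\En_i$), the remaining reasoning is routine bookkeeping on top of Theorem \ref{theorem:prop_rbm}.
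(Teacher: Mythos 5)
Your proof is correct and follows essentially the same route as the paper: the paper likewise builds $\mathcal{N}$ by giving each weighted conjunctive clause of each formula's SDNF its own hidden unit whose connection weights and bias are scaled by the formula's weight, which is exactly your composite energy $\sum_i w_i \En_i(\vt{x},\vt{h}^{(i)})$ with disjoint hidden groups, and the minimization over $\vt{h}$ decomposes in the same way. Your explicit caveat that $w_i \geq 0$ is needed to pull each weight out of the corresponding minimum is a point the paper leaves implicit (its only additional step is merging identical or subsumed clauses by summing their weights, an optimization that is not essential to the equivalence).
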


A formula $\varphi_i$ can be decomposed into a set of (weighted) conjunctive clauses from its SDNF. If there exist two conjunctive clauses such that one is subsumed by the other then the subsumed clause is removed and the weight of the remaining clause is replaced by the sum of their weights. Identical conjunctive clauses are treated in the same way: one of them is removed and the weights are added. 
%These steps would create a set of generalised and unique weighted conjunctive clauses. Now, 
From Theorem \ref{theorem:prop_rbm}, we know that a conjunctive clause
$\bAnd_{t \in \mathcal{S}_{T_j}}\pr{x}_t \fzand \bAnd_{k \in
\mathcal{S}_{K_j}} \neg \pr{x}_{k}$ is equivalent to an energy term
$e_j(\vt{x},h_j) = -h_j(\sum_{t\in \mathcal{S}_{T_j}} x_t
- \sum_{k \in \mathcal{S}_{K_j}}x_{k} -|\mathcal{S}_{T_{j}}|+ \epsilon)$ where $0<\epsilon<1$. A weighted conjunctive clause $w': \bAnd_{t \in \mathcal{S}_{T_j}}\pr{x}_t \fzand \bAnd_{k \in
\mathcal{S}_{K_j}} \neg \pr{x}_{k}$, therefore, is equivalent to an energy term
 $w' e_j(\vt{x},h_j)$. For each weighted conjunctive clause, we can add a hidden unit $j$ to an RBM with connection weights $w_{tj} = w'$ for all $t\in \mathcal{S}_{T_j}$ and and $w_{kj}=-w'$ for all $k \in
 \mathcal{S}_{K_j}$. The bias for this hidden unit will be $w'(-|\mathcal{S}_{T_{j}}|+\epsilon)$. The weighted knowledge-base and the RBM are equivalent because $s_\Phi(\vt{x})
 \propto -\frac{1}{\epsilon} min_{\vt{h}}\En(\vt{x},\vt{h})$, where $s_\Phi(\vt{x})$ is the
 sum of the weights of the clauses that are satisfied by $\vt{x}$.

\begin{example} (Nixon diamond problem) Consider the following weighted knowledge-base:
{%\footnotesize
\begin{align*}
&1000: \pr{n} \rif \pr{r}  \quad \text{ Nixon is a Republican.} \\       
&1000: \pr{n} \rif \pr{q}  \quad \text{ Nixon is also a Quaker.}\\
&10\text{ }\text{ }\text{ }  : \pr{r} \rif \neg \pr{p} \quad \text{Republicans tend not to be Pacifists.}\\
&10\text{ }\text{ }\text{ }  : \pr{q} \rif \pr{p} \quad \text{Quakers tend to be Pacifists.}
\end{align*}
}
\begin{figure}[ht]
\centering
\includegraphics[width=0.4\textwidth]{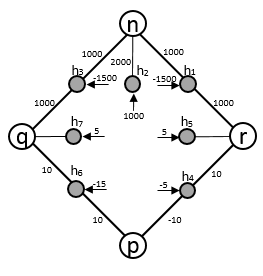}
\caption{The RBM for the Nixon diamond problem. 
With $\epsilon=0.5$, this RBM has energy function:
$
\En = -h_1(1000n+1000r-1500) - h_2(-2000n+1000)-h_3(1000n + 1000q-1500)-h_4(10r - 10p - 5)-h_5(-10r + 5) -h_6(10q + 10p - 15)- h_7(-10q + 5).
$}
\label{fig:rbm_nixon}
%\vskip -.5cm
\end{figure}
Converting all formulae to SDNFs, e.g. $1000:\pr{n} \rif \pr{r} \equiv 1000:(\pr{n} \fzand \pr{r})\fzor (\neg \pr{n})$, produces $8$ conjunctive clauses. After combining the weights of clause ($\neg \pr{n}$) which appears twice, an RBM is created (Figure \ref{fig:rbm_nixon}) from the following unique conjunctive clauses and confidence values:
$ 1000: \pr{n} \fzand \pr{r}, \quad 2000: \neg \pr{n}, \quad 1000: \pr{n} \fzand \pr{q}, \quad 10: \pr{r} \fzand \neg \pr{p}, \quad 10: \neg \pr{r}, \quad 10: \pr{q} \fzand \pr{p}, \quad 10: \neg \pr{q}.
$
\end{example}

\section{Experimental Results}
\label{exp}
\subsection{Reasoning}
In this experiment we apply LBM to effectively search for satisfying truth assignments of variables in large formulae. Let us define a class of formulae:
\begin{equation}
    \varphi \equiv \bAnd_{i=1}^M \pr{x}_i \fzand (\bOr_{j=M+1}^{M+N} \pr{x}_j)
\end{equation}

A formula in this class consists of $2^{M+N}$ possible truth assignments of the variables, with $2^N-1$ of them mapping the formula to $True$ (call this the {\it satisfying set}). Converting to SDNF as done before but now for the class of formulae, we obtain:
\begin{equation}
    \varphi \equiv \bOr_{j=M+1}^{M+N} (\bAnd_{i=1}^M \pr{x}_i \fzand \bAnd_{j'=j+1}^{M+N} \neg \pr{x}_{j'} \fzand \pr{x}_{j} )
\end{equation}

\noindent Applying Theorem \ref{theorem:prop_rbm} to construct an RBM from $\varphi$, we use Gibbs sampling to infer the truth values of all variables. A sample is {\it accepted} as a satisfying assignment if its free energy is lower than or equal to $-\log(1+\exp(c\epsilon)$ with $c=5, \epsilon=0.5$. We evaluate the {\it coverage} and {
\it accuracy} of accepted samples. Coverage is measured as the proportion of the satisfying set that is accepted. In this experiment, this is the number of satisfying assignments in the set of accepted samples divided by $2^N-1$). Accuracy is measured as the percentage of accepted samples that satisfy the formula. 

\begin{figure*}[ht]
\centering
\begin{subfigure}{0.45\textwidth}
\includegraphics[width=.9\textwidth]{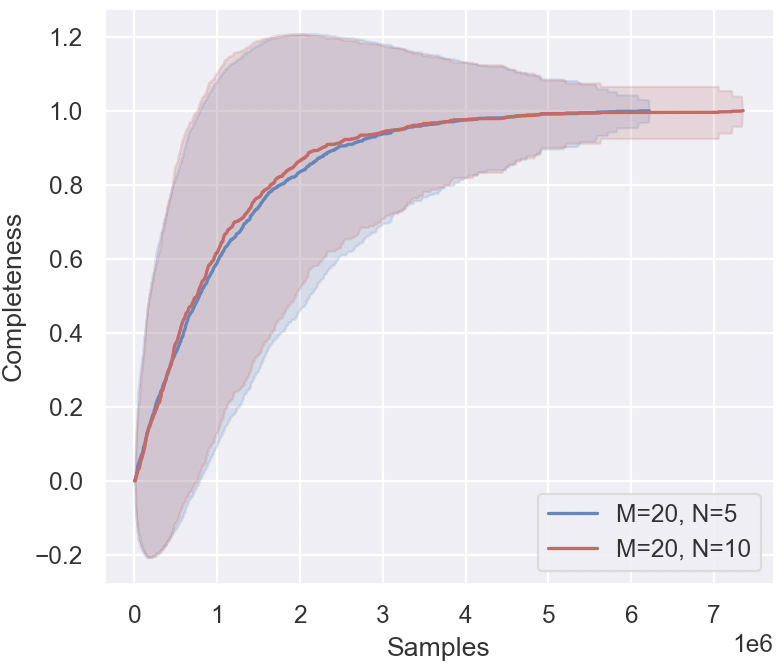}
%\caption{}
%\label{}
\end{subfigure}
\centering
\begin{subfigure}{0.45\textwidth}
\includegraphics[width=.9\textwidth]{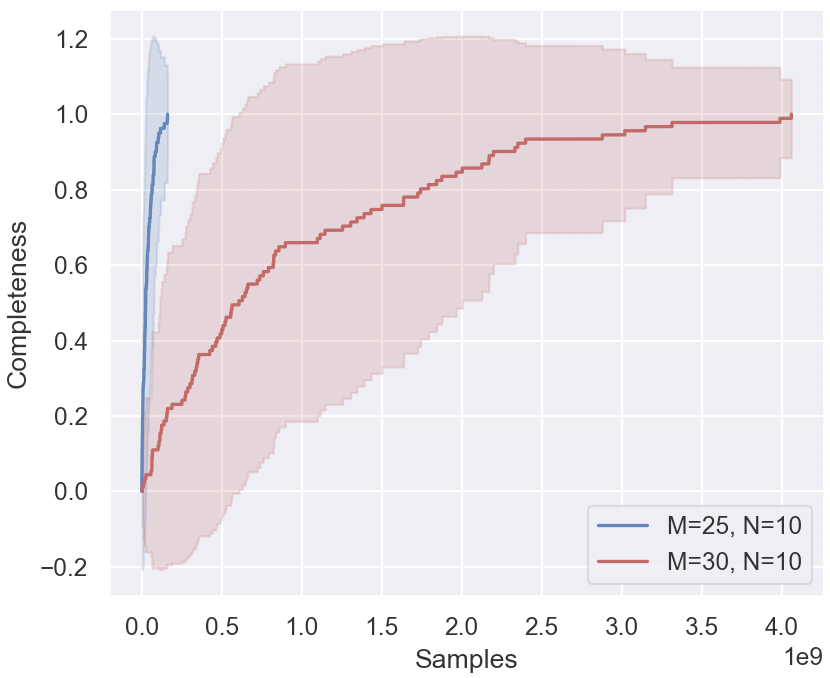}
%\caption{}
%\label{}
\end{subfigure}
\caption{Percentage coverage as sampling progresses in LBM. 100\% coverage is achieved for the class of formulae with different values for M and N averaged over 100 runs. The number of samples needed to achieve $100\%$ coverage is much lower than the number of possible assignments ($2^{M+N}$). For example, when M=20, N=10, all satisfying assignments are found after $\sim 7.5$ million samples are collected, whereas the number of possible assignments is $\sim 1$ billion, a ratio of sample size to the search space of $0.75\%$. The ratio for M=30, N=10 is even lower at $0.37\%$.}
\label{fig:sampling}
\end{figure*}

We test different values of $M\in \{20,25,30\}$ and $N\in\{3,4,5,6,7,8,9,10\}$. LBM achieves $100\%$ accuracy in all cases, meaning that all accepted samples do satisfy the formula.
Figure \ref{fig:sampling} shows the coverage as Gibbs sampling progresses (after each time that a number of samples is collected). Four cases are considered: M=20 and N=5, M=20 and N=10, M=25 and N=10, M=30 and N=10. 
In each case, we run the sampling process 100 times and report the average results with standard deviations. The number of samples needed to achieve $100\%$ coverage is much lower than the number of possible assignments ($2^{M+N}$). For example, when M=20, N=10, all satisfying assignments are found after $\sim 7.5$ million samples are collected, whereas the number of possible assignments is $\sim 1$ billion, producing a ratio of sample size to the search space size of just $0.75\%$. The ratio for M=30, N=10 is even lower at $0.37\%$ w.r.t. $\sim 10^{12}$ possible assignments. As far as we know, this is the first study of reasoning in neurosymbolic AI to produce these results with such low ratios.

\begin{figure}[ht]
    \centering
    \includegraphics[width=0.5\textwidth]{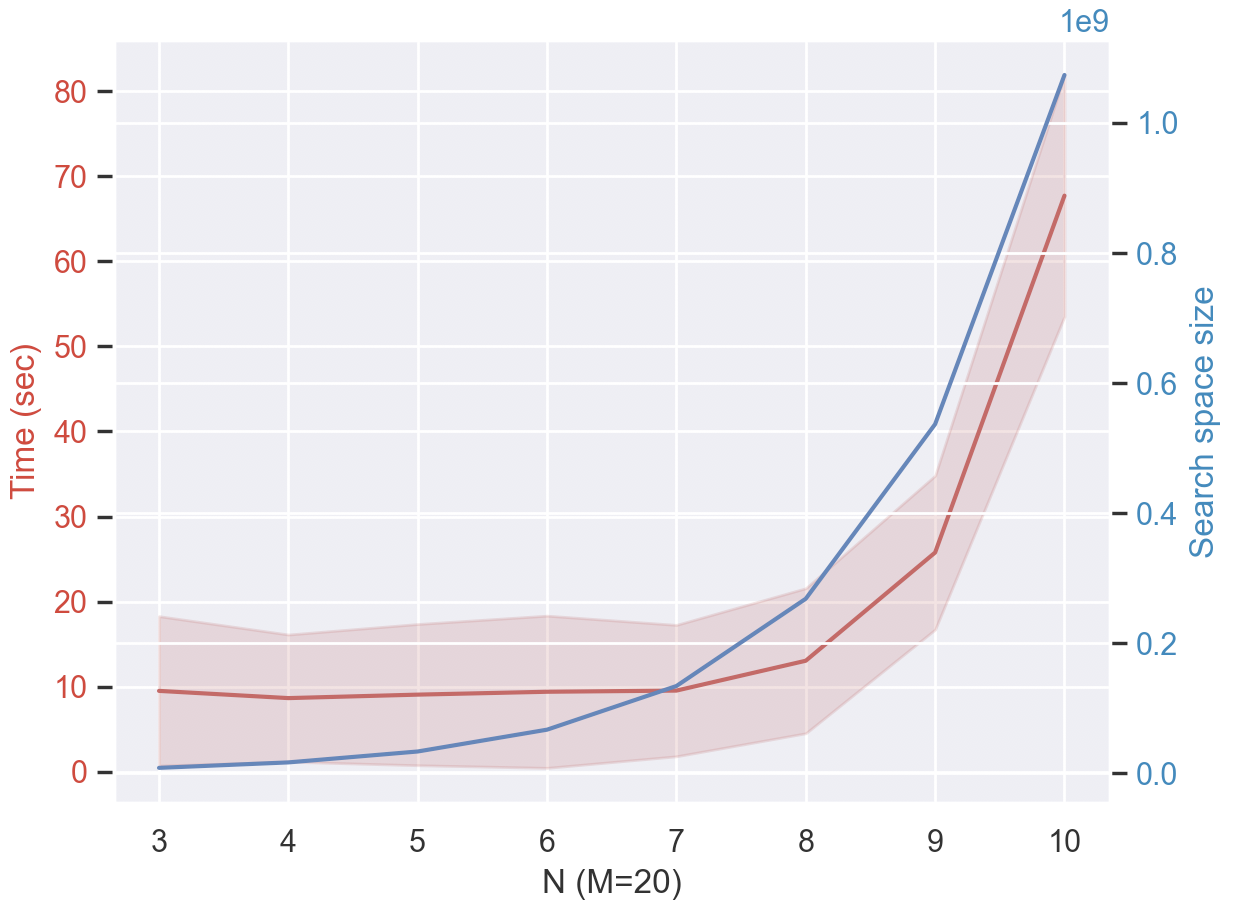}
    \caption{Time taken by LBM to collect all satisfying assignments compared with the size of the search space (number of possible assignments up to $\sim 1$ billion) as N increases from 3 to 10 with fixed M=20. LBM only needs around 10 seconds for $N<=8$, $\sim 25$ seconds for $N=9$, and $\sim 68$ seconds for $N=10$. The curve grows exponentially, similarly to the search space size, but at a much lower scale.}
    \label{fig:time}
\end{figure}

Figure \ref{fig:time} shows the time needed to collect all satisfying assignments for different N in $\{3,4,5,6,7,8,9,10\}$ with $M=20$. LBM only needs around 10 seconds for $N<=8$, $\sim 25$ seconds for $N=9$, and $\sim 68$ seconds for $N=10$. The curve grows exponentially, similarly to the search space size, but at a much lower scale. 

%Propositionalisation translates a relational database into propositional features. It has been shown useful for the integration of learning and reasoning. One of the most popular propositionalisation techniques for neurosymbolic AI, known as {\it Bottom Clause Propositionalisation}, has been applied to several Inductive Logic Programming (ILP) tasks as part of the CILP++ system \cite{Franca_2014} in comparison with ILP state-of-the-art system Aleph \cite{aleph}. 

\subsection{Learning from Data and Knowledge}
In this experiment, we evaluate LBM at learning the same Inductive Logic Programming (ILP) benchmark tasks used by neurosymbolic system CILP++ \cite{Franca_2014} in comparison with ILP state-of-the-art system Aleph \cite{aleph}. An initial LBM is constructed from a subset of the available clauses\footnote{The rest of the clauses is used for training and validation in the usual way, whereby satisfying assignments are selected from each clause as training examples, for instance from clause $y \lif \pr{x}_1 \fzand \neg \pr{x}_2$, assignment $y=1,x_1=1,x_2=0$ is converted into vector $[1,1,0]$.} used as background knowledge, more hidden units with random weights are added to the RBM and the system is trained further from examples, following the methodology used in the evaluation of CILP++. Both confidence values and network weights are free parameters for learning.  

We carry out experiments on 7 data sets: Mutagenesis \cite{Srinivasan_1994}, KRK \cite{Bain_1994}, UW-CSE \cite{Richardson_2006}, and the Alzheimer's benchmark: Amine, Acetyl, Memory and Toxic \cite{King_1995}. For Mutagenesis and KRK, we use $2.5\%$ of the clauses as background knowledge to build the initial LBM. For the larger data sets UW-CSE and Alzheimer's benchmark, we use $10\%$ of clauses as background knowledge. The number of hidden units added to the LBM was chosen arbitrarily to be $50$. For a fair comparison, we also evaluate LBM against a fully-connected RBM with $100$ hidden units so as to offer the RBM more parameters than the LBM since the RBM does not use background knowledge. Both RBM and LBM are trained in discriminative fashion \cite{Larochelle_2012} using the conditional distribution $p(y|\vt{x})$ for inference. The code with these experiments will be made available.  
%\ref{cond_dis}.

The results using 10-fold cross validation are shown in Table \ref{tab:ilp}, except for UW-CSE which use 5 folds. The results for Aleph and CILP++ were collected from \cite{Franca_2014}. It can be seen that LBM has the best performance in 5 out of 7 data sets. In the {\it alz-acetyl} data set, Aleph is better than all other models in this evaluation, and the RBM is best in the {\it alz-amine} data set, despite not using background knowledge (although such knowledge is provided in the form of training examples).  
\begin{table}[ht]
\centering

\begin{tabular}{|l|c|c|c|c|}
\hline
            & { Aleph}    & {CILP++} & RBM     & {LBM}  \\
 \hline
 {\small Mutagenesis}       & 80.85    & 91.70      & 95.55 & \textbf{96.28} \\
 {\small KRK}        & 99.60    & 98.42       & 99.70 & \textbf{99.80} \\
 {\small UW-CSE}     & 84.91    & 70.01 & 89.14      & \textbf{89.43} \\
 {\small alz-amine}  & 78.71    & 78.99 & \textbf{79.13}     & 78.25 \\
 {\small alz-acetyl} & \textbf{69.46}    & 65.47        & 62.93 & 66.82 \\
 {\small alz-memory} & 68.57    & 60.44      & 68.54 & \textbf{71.84} \\
 {\small alz-toxic}  & 80.50    & 81.73     & 82.71  & \textbf{84.95} \\
 \hline
 \hline
\end{tabular}

%\vskip -.2cm
\caption{Cross-validation performance of LBM against Aleph, CILP++ and standard RBM on 7 data sets.}
\label{tab:ilp}
%\vskip -.3cm
\end{table}

\section{Conclusion and Future Work}
\label{sec:discussion}
We introduced an approach and neurosymbolic system to reason about symbolic knowledge at scale in an energy-based neural network. We showed equivalence between minimising the energy of RBMs and satisfiability of Boolean formulae. We evaluated the system at learning and showed its effectiveness in comparison with state-of-the-art approaches. As future work we shall analyse further the empirical results and seek to pinpoint the benefits of LBM at combining reasoning and learning at scale.

% Use \bibliography{yourbibfile} instead or the References section will not appear in your paper
\bibliographystyle{plain}
\bibliography{bibio}

\end{document}